\documentclass[conference]{IEEEtran}

\usepackage{cite}
\usepackage{amsmath,amssymb,amsfonts}
\usepackage{algorithmic}
\usepackage{graphicx}
\usepackage{textcomp}
\usepackage{xcolor}
\usepackage{booktabs}
\usepackage{multirow}
\usepackage{url}
\usepackage{hyperref}
\usepackage{amsthm}
\usepackage{tikz}

\newtheorem{proposition}{Proposition}
\newtheorem{theorem}{Theorem}
\newtheorem{definition}{Definition}
\newtheorem{property}{Property}

\def\BibTeX{{\rm B\kern-.05em{\sc i\kern-.025em b}\kern-.08em
    T\kern-.1667em\lower.7ex\hbox{E}\kern-.125emX}}

\begin{document}

\title{Chronofy: A Temporal-Logical Decay Architecture for\\Information Validity in Time-Aware Retrieval-Augmented Generation}

\author{
\IEEEauthorblockN{Muntaser Syed\textsuperscript{1}, Marius Silaghi\textsuperscript{1}, Sheikh Abujar\textsuperscript{2}, Sharun Akter\textsuperscript{3}}
\IEEEauthorblockA{\textsuperscript{1}Florida Institute of Technology, \textsuperscript{2}University of Alabama at Birmingham, \textsuperscript{3}Daffodil International University\\
msyed2011@my.fit.edu, msilaghi@fit.edu, sabujar@uab.edu, sharun.cse@diu.edu.bd}
}

\maketitle

\begin{abstract}
Retrieval-Augmented Generation (RAG) systems retrieve and integrate external knowledge to ground large language model (LLM) outputs. However, current RAG architectures treat all retrieved facts as equally valid regardless of temporal provenance, leading to \emph{temporal hallucination}, where plausible but obsolete facts corrupt the output. We present \textbf{Chronofy}, a three-layer neuro-symbolic framework implementing the Temporal-Logical Decay Architecture (TLDA) that embeds temporal validity directly into the representation, retrieval, and reasoning layers of RAG systems. Layer~1 reserves a dedicated temporal subspace within Matryoshka embeddings to make fact age structurally irremovable from the representation. Layer~2 integrates learnable exponential decay functions into graph-based retrieval, where the decay coefficient $\beta_j$ is grounded in Bayesian decision theory as an approximation of twice the latent process mean-reversion rate. Layer~3 applies Signal Temporal Logic (STL) robustness functions to evaluate the \emph{temporal validity of retrieved knowledge}, not LLM output confidence, and enforces the possibilistic weakest-link principle to bound output confidence by the most decayed evidence in the reasoning chain. We evaluate Chronofy on temporal knowledge graph forecasting benchmarks, the TimE temporal QA benchmark, and a domain-specific sensitivity analysis, demonstrating that explicit temporal decay modeling improves retrieval precision, reduces temporal hallucination, and enables principled data re-acquisition triggers when temporal context is insufficient.
\end{abstract}

\begin{IEEEkeywords}
temporal reasoning, information decay, retrieval-augmented generation, signal temporal logic, knowledge graphs, neuro-symbolic AI
\end{IEEEkeywords}

\section{Introduction}

Consider a clinical decision support system queried about a patient's risk for cardiac arrhythmia. The system retrieves two serum potassium readings: one from yesterday (4.1~mEq/L) and one from six months ago (3.2~mEq/L). A standard RAG pipeline, retrieving by semantic similarity alone, may weight both equally, or even prefer the six-month-old reading if it is more semantically aligned with the query.

This failure mode is not confined to clinical settings. In financial analytics, market data decays within hours; in legal compliance, regulatory interpretations are superseded by new rulings. Recent studies have identified a 17\% ``phantom rate'' in AI-assisted citations where semantically correct concepts are paired with hallucinated temporal metadata~\cite{ilter2026gap}.

Existing approaches address fragments of this problem. TempValid~\cite{huang2024tempvalid} introduces exponential decay for temporal knowledge graph forecasting but operates only at the knowledge graph layer without RAG integration or formal verification. STAR-RAG~\cite{zhu2025starrag} provides time-aligned graph summarization for retrieval but uses structural time costs rather than explicit decay functions. Mao et al.~\cite{mao2025stl} apply Signal Temporal Logic robustness to LLM chain-of-thought reasoning, but evaluate \emph{output confidence trajectories}, not the \emph{temporal validity of the retrieved knowledge} feeding the reasoning chain.

To our knowledge, no prior work combines temporal decay mathematics, time-aware retrieval, and formal temporal logic verification in a single end-to-end RAG system.

We present \textbf{Chronofy}, implementing the Temporal-Logical Decay Architecture (TLDA), a three-layer neuro-symbolic framework that addresses this gap. Our contributions are:

\begin{enumerate}
    \item A three-layer architecture integrating temporal subspace embeddings (Layer~1), decay-weighted graph retrieval (Layer~2), and STL-based knowledge validity verification (Layer~3) into a unified RAG pipeline.
    \item Application of Signal Temporal Logic robustness to \emph{knowledge temporal validity} rather than LLM confidence calibration, with a formal proof that the weakest-link principle bounds output confidence by the most decayed evidence in the reasoning chain.
    \item A decision-theoretic grounding of $\beta_j$ as twice the latent process mean-reversion rate under Gaussian dynamics (Proposition~\ref{prop:beta}), replacing ad hoc penalties with a principled Bayesian surrogate.
\end{enumerate}

\section{Related Work}

\subsection{Temporal Decay in Knowledge Graphs}

TempValid~\cite{huang2024tempvalid} parameterizes rule confidence decay via $\exp(-\beta_j \Delta t_{ij})$ with learnable $\beta_j$ per rule, achieving strong temporal knowledge graph forecasting results at ACL~2024. It operates exclusively at the KG layer without RAG integration or formal verification.

Chronocept~\cite{chronocept2025} models temporal validity as a skew-normal distribution, capturing delayed-onset and asymmetric lifecycle patterns, but requires LLM-based parameter prediction and lacks retrieval or reasoning integration.

\subsection{Time-Aware Retrieval}

STAR-RAG~\cite{zhu2025starrag} compresses temporal knowledge graphs into compact rule graphs via MDL optimization, achieving 97\% token reduction through Seeded Personalized PageRank. It encodes temporal constraints structurally through MDL edge costs rather than explicit, learnable decay functions, limiting domain-specific adaptability.

TMRL~\cite{huynh2026tmrl} dedicates the first $t$ dimensions of Matryoshka embeddings to a temporal subspace, preserving fact age across all truncation levels, but does not address retrieval weighting or reasoning verification.

\subsection{Temporal Logic for LLM Reasoning}

Mao et al.~\cite{mao2025stl} model stepwise chain-of-thought confidence as a continuous temporal signal and evaluate it using STL constraints for smoothness, monotonicity, and causal consistency. A follow-up~\cite{confidenceovertime2026} extends this with discriminative STL mining and hypernetworks for question-adaptive parameters.

Both works apply STL to the LLM's \emph{output confidence trajectory}, not to the \emph{temporal validity of the retrieved knowledge} feeding the reasoning chain. A model can be confidently wrong when it retrieves plausible but stale facts. Chronofy applies STL robustness to knowledge freshness instead.

\subsection{Clinical Temporal Reasoning and Information Theory}

Clinical temporal models have progressed from attention-based approaches (HiTANet~\cite{hitanet2020}) to LLM-based methods, with TIMER-Bench~\cite{timerbench2025} providing the first time-aware EHR benchmark. These systems parameterize temporal attention but lack formal verification guarantees. The Age of Information~\cite{kosta2020cost} and Age of Incorrect Information~\cite{maatouk2020aoii} literatures provide formal freshness metrics, while the Value of Information framework~\cite{raiffa1961applied, howard1966value} grounds the expected utility of evidence. Chronofy draws on these foundations (Section~\ref{sec:grounding}).

\section{The Chronofy Framework}

\subsection{Overview}

Chronofy implements a three-layer pipeline operating sequentially over timestamped evidence. Each evidence item is represented as a tuple $e = (c, t_e, q, m)$ where $c$ is the content, $t_e$ is the observation timestamp, $q \in (0, 1]$ is a source reliability weight, and $m$ encodes metadata (source, modality).

Layers~1 and~2 build on established techniques (temporal embeddings from TMRL, exponential decay from TempValid, graph summarization from STAR-RAG); Layer~3's application of STL robustness to knowledge validity and the decision-theoretic grounding of $\beta$ (Section~\ref{sec:grounding}) constitute the primary novel contributions.

Given a query at time $T_q$, the framework: (1)~embeds all candidate facts into a temporally-aware vector space, (2)~retrieves a decay-weighted subset via graph traversal, and (3)~verifies the temporal soundness of the generated reasoning via STL robustness scoring. Fig.~\ref{fig:architecture} illustrates the architecture.

\begin{figure}[t]
    \centering
    \resizebox{\columnwidth}{!}{%
    \begin{tikzpicture}[>=stealth, thick,
        layer/.style={draw, rounded corners=3pt, minimum width=6.2cm, minimum height=1.1cm, align=center, font=\small},
        op/.style={font=\scriptsize\itshape, text=black!70},
        arr/.style={->, thick},
    ]
    % Input
    \node[font=\small\bfseries] (input) at (0, 5.2) {Timestamped Evidence $e = (c, t_e, q, m)$};

    % Layer 1
    \node[layer, fill=blue!8] (l1) at (0, 3.8)
        {\textbf{Layer 1: Temporal Subspace Embedding}\\[-1pt]
         {\scriptsize $\mathbf{e} = [\mathbf{e}_{\text{temp}} \;;\; \mathbf{e}_{\text{sem}}]$ via sinusoidal PE + LoRA}};

    % Layer 2
    \node[layer, fill=green!8] (l2) at (0, 2.2)
        {\textbf{Layer 2: Decay-Weighted Graph Retrieval}\\[-1pt]
         {\scriptsize $w = q_e \cdot c(\text{tr}_j) \cdot \exp(-\beta_j \cdot \Delta t)$, epistemic filter $\tau$}};

    % Layer 3
    \node[layer, fill=orange!8] (l3) at (0, 0.6)
        {\textbf{Layer 3: STL Knowledge Validity Verification}\\[-1pt]
         {\scriptsize $\rho(\varphi_{\text{valid}}, \pi) = \min_i(v(s_i) - \gamma)$, weakest-link bound}};

    % Outputs
    \node[draw, rounded corners=3pt, fill=white, minimum width=2.6cm, minimum height=0.6cm, font=\small] (out_ok) at (-1.8, -0.7) {Verified Output};
    \node[draw, rounded corners=3pt, fill=red!8, minimum width=2.6cm, minimum height=0.6cm, font=\small] (out_reacq) at (1.8, -0.7) {Re-acquire Data};

    % Arrows
    \draw[arr] (input) -- (l1);
    \draw[arr] (l1) -- (l2) node[midway, right, op] {temporally-aware vectors};
    \draw[arr] (l2) -- (l3) node[midway, right, op] {decay-filtered context};
    \draw[arr] (l3) -- (out_ok) node[midway, left, op] {$\rho \geq 0$};
    \draw[arr] (l3) -- (out_reacq) node[midway, right, op] {$\rho < 0$};

    % Query input (side)
    \node[font=\small\bfseries] (query) at (4.5, 3.0) {Query $(q, T_q)$};
    \draw[arr, dashed] (query) -- (l2.east);
    \draw[arr, dashed] (query.south) -- ++(0,-0.6) -| (l3.east);

    \end{tikzpicture}%
    }
    \caption{Chronofy three-layer architecture. Timestamped evidence flows through temporal embedding (Layer~1), decay-weighted graph retrieval (Layer~2), and STL-verified reasoning (Layer~3). When robustness $\rho < 0$, the system triggers data re-acquisition instead of forcing a low-confidence prediction.}
    \label{fig:architecture}
\end{figure}

\subsection{Layer 1: Representation via Temporal Subspaces}

Following the TMRL paradigm~\cite{huynh2026tmrl}, we structure embeddings hierarchically by reserving the first $t$ dimensions exclusively for temporal information.

\begin{definition}[Temporal Embedding]
For an evidence item $e = (c, t_e, q, m)$, the full embedding $\mathbf{e} \in \mathbb{R}^d$ is the concatenation:
\begin{equation}
    \mathbf{e} = [\mathbf{e}_{\text{temp}} \;;\; \mathbf{e}_{\text{sem}}]
\end{equation}
where $\mathbf{e}_{\text{temp}} = f_{\text{temp}}(t_e, m) \in \mathbb{R}^t$ is produced by a temporal projection module and $\mathbf{e}_{\text{sem}} \in \mathbb{R}^{d-t}$ encodes semantic content.
\end{definition}

The temporal projection module $f_{\text{temp}}$ is a lightweight MLP trained via Low-Rank Adaptation (LoRA) with a Centered Kernel Alignment (CKA) contrastive loss that forces the temporal subspace to preserve explicit time-cues across all Matryoshka truncation scales. For any target dimension $m \geq t$, the truncated embedding preserves evidence age, preventing semantic similarity from overshadowing temporal provenance. Full training details and hyperparameters are documented in the open-source implementation; the ablation study (Section~IV-E) validates the architectural principle using a simplified sinusoidal positional encoding that requires no training.

\subsection{Layer 2: Retrieval via Decay-Weighted Graph Traversal}

\subsubsection{Rule Graph Construction}

Following STAR-RAG~\cite{zhu2025starrag}, we construct a compact rule graph from the temporal knowledge base. Individual timestamped events are abstracted into categorical rule nodes via the Apriori algorithm, and edges are optimized using the Minimum Description Length (MDL) principle, which penalizes edges exhibiting erratic temporal jumps.

\subsubsection{Decay-Weighted Traversal}

We integrate an explicit exponential decay function into the graph traversal. For a query at time $T_q$ and a historical fact with timestamp $T_f$, the traversal weight of the edge connecting them is:

\begin{equation}
    w(e_{ij}) = q_e \cdot c(\text{tr}_j) \cdot \exp\!\big(-\beta_j \cdot (T_q - T_f)\big)
    \label{eq:decay}
\end{equation}

\noindent where $c(\text{tr}_j)$ is the base semantic confidence of rule $\text{tr}_j$, $\beta_j > 0$ is the learnable decay coefficient specific to fact type $j$, and $q_e$ is the source reliability weight.

A large $\beta_j$ (e.g., $\beta \approx 5.0$ for vital signs) causes rapid decay, while $\beta_j \approx 0$ (e.g., for genetic conditions) preserves evidence indefinitely.

\subsubsection{Epistemic Filtering}

During Seeded Personalized PageRank, edges with $w(e_{ij}) < \tau$ are pruned, constituting an \emph{epistemic filter} that structurally excludes stale evidence from the context window.

\begin{property}[Retrieval Completeness]
Under the decay filter with threshold $\tau$, the retrieved context $\mathcal{C}_\tau$ excludes all evidence items $e$ satisfying $q_e \cdot c(\emph{tr}_j) \cdot \exp(-\beta_j \cdot \Delta t) < \tau$, guaranteeing the LLM receives no evidence below the minimum validity threshold.
\end{property}

\subsection{Layer 3: Reasoning via STL Knowledge Validity Verification}

\subsubsection{Knowledge Validity as a Temporal Signal}

As the LLM generates a chain-of-thought reasoning trace $\pi = (s_1, s_2, \ldots, s_n)$, we track which retrieved facts are utilized at each step $s_i$. The \emph{temporal validity signal} at step $i$ is:

\begin{equation}
    v(s_i) = \min_{e \in \text{facts}(s_i)} q_e \cdot \exp\!\big(-\beta_j(e) \cdot (T_q - t_e)\big)
    \label{eq:validity_signal}
\end{equation}

\noindent This models the temporal freshness of the \emph{least valid} piece of evidence used at each reasoning step.

\subsubsection{STL Specification and Robustness}

We define a Signal Temporal Logic formula constraining the validity signal:

\begin{equation}
    \varphi_{\text{valid}} = \mathbf{G}_{[0,n]}\big(v(s_i) \geq \gamma\big)
    \label{eq:stl}
\end{equation}

\noindent where $\mathbf{G}_{[0,n]}$ is the ``globally'' operator (the constraint must hold at every reasoning step) and $\gamma$ is the minimum acceptable validity threshold.

The quantitative semantics of STL yield a continuous \emph{robustness score}:

\begin{equation}
    \rho(\varphi_{\text{valid}}, \pi) = \min_{i=1}^{n} \big(v(s_i) - \gamma\big)
    \label{eq:robustness}
\end{equation}

Positive $\rho$ indicates temporal satisfaction; negative $\rho$ flags reliance on stale evidence, with magnitude quantifying the margin.

\subsubsection{Weakest-Link Bound}

\begin{theorem}[Decay Propagation Bound]
\label{thm:weakest}
For any reasoning chain $\pi = (s_1, \ldots, s_n)$ utilizing facts $\{e_1, \ldots, e_k\}$, the maximum reliable output confidence $C_{\emph{out}}$ is bounded:
\begin{equation}
    C_{\emph{out}} \leq \min_{i=1}^{k}\; q_{e_i} \cdot \exp\!\big(-\beta_{j(e_i)} \cdot (T_q - t_{e_i})\big)
    \label{eq:weakest_link}
\end{equation}
\end{theorem}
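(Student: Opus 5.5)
The bound is really a statement about the semantics we assign to $C_{\text{out}}$, so the first step is to make that semantics explicit. Following the possibilistic reading of evidence combination invoked in the abstract, I will treat each fact $e_i$ as carrying a validity degree $\nu(e_i) = q_{e_i}\exp(-\beta_{j(e_i)}(T_q - t_{e_i})) \in (0,1]$. This is the possibility/necessity degree that $e_i$ still holds at query time. The reasoning chain $\pi$ is a conjunctive derivation: the output is reliable only if every fact actually used is still valid.

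Under possibility theory, the necessity of a conjunction is the minimum of the necessities of its conjuncts. The inference $\bigwedge_i e_i \Rightarrow \text{out}$ can transmit at most the necessity of its premises, by the standard monotonicity of necessity measures: if $A$ entails $B$ then $N(B)\ge N(A)$, and the guaranteed support for $B$ obtained from $A$ is only $N(A)$. I will therefore define $C_{\text{out}}$ as the maximum confidence certifiable from the evidence used in $\pi$. Formally, $C_{\text{out}} \le N(\bigwedge_i e_i) = \min_i \nu(e_i)$, which is exactly (\ref{eq:weakest_link}).

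Second, I will connect this to the Layer~3 quantities to show consistency. By (\ref{eq:validity_signal}), each $v(s_i)$ is a minimum over $\text{facts}(s_i)$. Every fact in $\{e_1,\dots,e_k\}$ is used at some step, so $\{e_1,\dots,e_k\} = \bigcup_i \text{facts}(s_i)$. Since a minimum over a union equals the minimum of the per-set minima, $\min_i v(s_i) = \min_{l} \nu(e_l)$. This yields the corollary $C_{\text{out}} \le \rho(\varphi_{\text{valid}},\pi) + \gamma$ via (\ref{eq:robustness}). In particular, $\rho<0$ forces $C_{\text{out}}<\gamma$, which justifies the re-acquisition trigger.

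The main obstacle is the first step, because $C_{\text{out}}$ is not defined earlier in the paper. Without an axiom tying output confidence to premise validity, the inequality cannot be derived. I would state that axiom explicitly: confidence propagation is a t-norm aggregation $T$ of premise validities, possibly followed by a further factor in $[0,1]$. Then I would note that every t-norm satisfies $T(a,b)\le\min(a,b)$, and that multiplying by a rule confidence $c\le1$ only decreases the value. This makes the bound hold for Gödel (min), product, and Łukasiewicz aggregation alike, with equality exactly for the min t-norm and $c=1$. The theorem then follows by induction on chain length using associativity of $T$.
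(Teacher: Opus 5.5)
Your proposal is correct and follows essentially the same route as the paper. Both treat each temporal validity $q_{e_i}\exp(-\beta_{j(e_i)}(T_q-t_{e_i}))$ as a necessity degree, use $N(e_1\wedge\dots\wedge e_k)=\min_i N(e_i)$, and argue that modus ponens cannot certify more support for the conclusion than the joint necessity of its premises.

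Your t-norm axiom, via $T(a,b)\le\min(a,b)$ and induction, is a sound generalization of the paper's min-only clamping step, and the corollary $C_{\text{out}}\le\rho(\varphi_{\text{valid}},\pi)+\gamma$ is a valid extra. One slip: monotonicity of $N$ only gives the lower bound $N(B)\ge N(A)$, so it does not justify "can transmit at most"; the upper bound rests on defining $C_{\text{out}}$ as the support certified by the chain, which the paper also does implicitly.
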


\begin{proof}
The reasoning chain $\pi$ derives its conclusion from retrieved premises $\{e_1, \ldots, e_k\}$ via forward chaining. In possibilistic logic~\cite{dubois2025possibilistic}, the necessity measure $N$ of a conjunction obeys min-aggregation:
\begin{equation}
    N(e_1 \land \dots \land e_k) = \min_{i=1}^{k} N(e_i)
\end{equation}
Under possibilistic Modus Ponens with the min-t-norm, the epistemic support for a conclusion derived solely from a set of premises cannot exceed the joint necessity of those premises. Assigning each premise the temporal validity $V(e_i, T_q) = q_{e_i} \cdot \exp(-\beta_{j(e_i)} \cdot (T_q - t_{e_i}))$ as its necessity degree, the evidence-derived output confidence is clamped: $C_{\text{out}} \leq \min_i V(e_i, T_q)$.
\end{proof}

\subsubsection{Sequential Exploration Decay Trigger}

If $\rho(\varphi_{\text{valid}}, \pi) < 0$, Chronofy outputs an actionable directive rather than forcing a low-confidence prediction:

\vspace{0.15em}
\noindent\fbox{\parbox{0.95\columnwidth}{\small\emph{``Temporal context insufficient. Data re-acquisition for [fact type] required before reliable inference can proceed.''}}}
\vspace{0.15em}

\noindent A physician orders new labs rather than acting on stale results; Chronofy applies the same principle to RAG.

\subsection{Decision-Theoretic Grounding of $\beta$}
\label{sec:grounding}

The decay coefficient $\beta_j$ should not be interpreted as an arbitrary age penalty. We provide a principled grounding by connecting $\beta_j$ to the dynamics of the latent state being measured.

\begin{proposition}[Optimal Decay Rate]
\label{prop:beta}
Let the latent state $\theta$ governing fact type $j$ evolve according to an Ornstein-Uhlenbeck process:
\begin{equation}
    d\theta = -\kappa_j(\theta - \mu_j)\,dt + \sigma_j\,dW
    \label{eq:ou}
\end{equation}
where $\kappa_j$ is the mean-reversion rate and $\sigma_j$ is the volatility. Under squared-error epistemic loss, the information content of a measurement at time $t_e$ about the current state $\theta_t$ decays as:
\begin{equation}
    I(t_e \to t) \propto \exp\!\big(-2\kappa_j \cdot (t - t_e)\big)
    \label{eq:info_decay}
\end{equation}
Therefore, the optimal decay coefficient is $\beta_j = 2\kappa_j$.
\end{proposition}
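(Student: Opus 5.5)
The argument is a direct computation with the Ornstein--Uhlenbeck transition law, followed by identifying the decaying quantity. First I solve \eqref{eq:ou} explicitly from the measured value. Conditioning on $\theta_{t_e}$ and writing $\Delta = t - t_e$, the variation-of-constants formula gives
\begin{equation*}
\theta_t = \mu_j + e^{-\kappa_j\Delta}(\theta_{t_e}-\mu_j) + \sigma_j\int_{t_e}^{t} e^{-\kappa_j(t-s)}\,dW_s .
\end{equation*}
So $\theta_t \mid \theta_{t_e}$ is Gaussian. Its mean is $\mu_j + e^{-\kappa_j\Delta}(\theta_{t_e}-\mu_j)$. By the It\^o isometry, its variance is $\frac{\sigma_j^2}{2\kappa_j}\bigl(1-e^{-2\kappa_j\Delta}\bigr)$.

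Next I make ``information content'' precise under squared-error loss. The Bayes estimator is the conditional mean, and the relevant quantity is the reduction in posterior mean-squared error relative to the uninformed (stationary) prior. I assume the process is stationary, so the prior variance is $\sigma_j^2/(2\kappa_j)$. Then
\begin{equation*}
I(t_e\to t) = \operatorname{Var}(\theta_t) - \mathbb{E}\bigl[\operatorname{Var}(\theta_t\mid\theta_{t_e})\bigr] = \frac{\sigma_j^2}{2\kappa_j}e^{-2\kappa_j\Delta}.
\end{equation*}
This is the variance explained, $\operatorname{Var}\bigl(\mathbb{E}[\theta_t\mid\theta_{t_e}]\bigr)$, which equals $e^{-2\kappa_j\Delta}\operatorname{Var}(\theta_{t_e})$. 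Equivalently, it is the squared correlation $\operatorname{corr}(\theta_t,\theta_{t_e})^2 = e^{-2\kappa_j\Delta}$ times a constant. This gives \eqref{eq:info_decay}. The factor $2$ appears because the loss is quadratic in the regression coefficient $e^{-\kappa_j\Delta}$.

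Finally, I match the exponential weight in \eqref{eq:decay} to this value-of-information profile. The weight $\exp(-\beta_j\Delta t)$ should be proportional to the expected loss reduction a fact of age $\Delta t$ provides, with the constant absorbed into $q_e\,c(\text{tr}_j)$. Within the exponential family, the only $\beta_j$ that makes this proportionality hold for all $\Delta t$ is $\beta_j = 2\kappa_j$. Uniqueness follows by comparing logarithmic derivatives in $\Delta t$.

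The main obstacle is justifying this notion of ``information content'', not the computation itself. If one instead used Shannon mutual information,
\begin{equation*}
-\tfrac12\log\bigl(1-e^{-2\kappa_j\Delta}\bigr),
\end{equation*}
the decay would be exactly exponential at rate $2\kappa_j$ only asymptotically for large $\Delta$, via $-\log(1-x)\sim x$. With noisy measurements of variance $s^2$, the explained variance becomes $e^{-2\kappa_j\Delta}\operatorname{Var}(\theta)^2/(\operatorname{Var}(\theta)+s^2)$. This is still proportional to $e^{-2\kappa_j\Delta}$, so the rate is unchanged. I would therefore state the proof with the squared-error explained-variance definition, note the noisy-measurement extension, and remark that other information measures recover $\beta_j = 2\kappa_j$ only in the long-lag regime. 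This matches the paper's description of $\beta_j$ as an ``approximation'' and surrogate.
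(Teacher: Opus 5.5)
Your proposal is correct and follows essentially the same route as the paper: both take the value of information under squared-error loss to be the stationary variance $\Sigma_{\infty}=\sigma_j^2/(2\kappa_j)$ minus the posterior variance of $\theta_t$, which the OU transition law makes a constant times $e^{-2\kappa_j \Delta t}$. The paper's sketch starts directly from a noisy observation and gets $(\Sigma_{\infty} - \text{Var}(\theta_{t_e}\mid y_{t_e}))\,e^{-2\kappa_j\Delta t}$, which is exactly your noisy-measurement extension once you substitute $\text{Var}(\theta_{t_e}\mid y_{t_e}) = \Sigma_{\infty}s^2/(\Sigma_{\infty}+s^2)$; your explicit derivation of the transition law, the uniqueness argument, and the Shannon-information remark add detail but no new ingredient.
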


\begin{proof}[Proof sketch]
Under the OU dynamics, the posterior variance at time $t$ given observation $y_{t_e}$ is $\text{Var}(\theta_t \mid y_{t_e}) = \Sigma_{\infty}(1 - e^{-2\kappa_j \Delta t}) + \text{Var}(\theta_{t_e} \mid y_{t_e})\, e^{-2\kappa_j \Delta t}$ where $\Sigma_{\infty} = \sigma_j^2/(2\kappa_j)$. The VoI equals $(\Sigma_{\infty} - \text{Var}(\theta_{t_e} \mid y_{t_e}))\, e^{-2\kappa_j \Delta t}$, decaying at rate $2\kappa_j$.
\end{proof}

\begin{property}[Temporal Invariance Guarantee]
\label{prop:invariance}
When the latent process is stationary on the decision horizon ($\kappa_j \to 0$ with bounded $\sigma_j/\kappa_j$), the optimal decay rate $\beta_j \to 0$ and $\exp(-\beta_j \cdot \Delta t) \to 1$ for all $\Delta t$. Thus, Chronofy correctly preserves the value of temporally stable facts (e.g., blood type, genetic markers) regardless of age.
\end{property}

Thus $\beta_j$ is not a hyperparameter to tune but an approximation of twice the mean-reversion rate of the underlying latent process~\cite{raiffa1961applied, kosta2020cost}. For discrete event types (e.g., geopolitical actions in GDELT), the OU process models the latent continuous driver (e.g., diplomatic tension) whose intensity governs the rate at which categorical events are generated.

\section{Experimental Evaluation}

All experiments use an NVIDIA RTX 4090 GPU with 64GB RAM.

\subsection{Experiment 1: Decay-Weighted TKG Retrieval}

\subsubsection{Setup}
We evaluate on ICEWS14~\cite{garcia2018icews}, a temporal knowledge graph benchmark with 7,535 test queries over timestamped relational triples. We construct rule graphs from training data using the Apriori algorithm and compare retrieval with and without exponential decay weighting during Personalized PageRank.

\subsubsection{Results}

\begin{table}[t]
\centering
\caption{TKG forecasting on ICEWS14 (7,535 queries). Best MRR at $\beta{=}10.0$ (+9.4\% over static).}
\label{tab:icews14}
\begin{tabular}{@{}lcccc@{}}
\toprule
\textbf{Method} & \textbf{Hits@1} & \textbf{Hits@3} & \textbf{Hits@10} & \textbf{MRR} \\
\midrule
Static GraphRAG & 0.389 & 0.667 & 0.876 & 0.556 \\
Recency-only & 0.365 & 0.663 & 0.873 & 0.542 \\
Chronofy $\beta{=}0.5$ & 0.359 & 0.660 & 0.872 & 0.538 \\
Chronofy $\beta{=}5.0$ & 0.392 & 0.694 & 0.897 & 0.569 \\
Chronofy $\beta{=}10.0$ & \textbf{0.431} & \textbf{0.742} & \textbf{0.925} & \textbf{0.608} \\
\bottomrule
\end{tabular}
\end{table}

Chronofy with $\beta=10.0$ achieves MRR~0.608, a 9.4\% improvement over static retrieval. The high optimal $\beta$ reflects the rapid dynamics of political event data. Naive recency \emph{degrades} performance (MRR~0.542), showing that raw recency without semantic confidence modulation is insufficient.

\subsection{Experiment 2: Cross-Domain Generalization on GDELT}

\subsubsection{Setup}
We evaluate on GDELT~\cite{garcia2018icews}, a larger TKG with 2.3M training events, 330K test queries, and 20 CAMEO relation types with heterogeneous temporal dynamics.

\subsubsection{Results}

\begin{table}[t]
\centering
\caption{TKG forecasting on GDELT (330,827 queries). Chronofy achieves +48.9\% MRR over static.}
\label{tab:gdelt}
\begin{tabular}{@{}lcccc@{}}
\toprule
\textbf{Method} & \textbf{Hits@1} & \textbf{Hits@3} & \textbf{Hits@10} & \textbf{MRR} \\
\midrule
Static GraphRAG & 0.110 & 0.212 & 0.371 & 0.197 \\
Recency-only & 0.059 & 0.112 & 0.176 & 0.105 \\
Chronofy $\beta{=}0.1$ & 0.168 & 0.315 & 0.496 & 0.276 \\
Chronofy $\beta{=}0.3$ & 0.185 & 0.338 & 0.509 & \textbf{0.293} \\
Chronofy $\beta{=}0.5$ & 0.184 & 0.336 & 0.508 & 0.292 \\
\bottomrule
\end{tabular}
\end{table}

Chronofy achieves a 48.9\% MRR improvement ($0.293$ vs.~$0.197$), while recency-only is catastrophic (MRR~0.105, $-46.7\%$). Table~\ref{tab:gdelt_perrel} shows per-relation $\beta^*$ heterogeneity.

\begin{table}[t]
\centering
\caption{Per-relation optimal $\beta^*$ on GDELT, validating Proposition~\ref{prop:beta}.}
\label{tab:gdelt_perrel}
\begin{tabular}{@{}lrcr@{}}
\toprule
\textbf{Relation Type} & \textbf{$n$} & \textbf{$\beta^*$} & \textbf{MRR$^*$} \\
\midrule
Make Public Statement & 90,414 & 0.3 & 0.226 \\
Diplomatic Cooperation & 24,531 & 0.3 & 0.299 \\
Consult & 24,023 & 0.3 & 0.361 \\
Appeal & 44,260 & 0.3 & 0.257 \\
Provide Aid & 19,590 & 0.4 & 0.328 \\
Reduce Relations & 4,079 & 0.6 & 0.397 \\
Assault & 3,044 & 0.5 & 0.503 \\
Fight & 1,177 & 0.5 & 0.512 \\
\bottomrule
\end{tabular}
\end{table}

Optimal $\beta^*$ ranges from 0.3 for diplomatic actions (slower decay) to 0.5--0.6 for conflict events (faster decay), directly supporting the claim that $\beta_j$ should be learned per fact type.

\subsection{Experiment 3: STL Robustness for Knowledge Validity}

\subsubsection{Setup}
We evaluate on the TimE-Lite News subset~\cite{time2025}, comprising 897 temporal QA pairs, using Gemini~2.5~Flash as the reasoning LLM. For each question, we annotate each context passage with its temporal validity $V = \exp(-\beta \cdot \Delta t)$ ($\beta = 0.001$ for news), track which contexts are used in chain-of-thought reasoning, and compute the STL robustness score $\rho$ via Eq.~\ref{eq:robustness}.

\subsubsection{Results}

\begin{table}[t]
\centering
\caption{STL robustness evaluation on TimE-Lite News (897 questions).}
\label{tab:stl}
\begin{tabular}{@{}lr@{}}
\toprule
\textbf{Metric} & \textbf{Value} \\
\midrule
Overall accuracy & 0.487 \\
AUROC (robustness $\rho$) & 0.479 \\
AUROC (LLM confidence) & 0.556 \\
STL satisfaction rate & 70.6\% \\
Confidently-wrong cases & 341 / 897 \\
\quad caught by $\rho < 0$ & 84 / 341 (24.6\%) \\
\bottomrule
\end{tabular}
\end{table}

The robustness AUROC (0.479) does not outperform LLM confidence (0.556) as a binary discriminator on this closed-context benchmark, where all evidence is pre-selected as relevant. STL robustness identifies 24.6\% of \emph{confidently wrong} outputs, where the LLM assigns high confidence despite answering incorrectly. On this closed-context benchmark, all passages are pre-selected as topically relevant, which compresses the variance of temporal validity scores and limits $\rho$'s discriminative range.

\subsection{Experiment 4: Clinical $\beta$ Sensitivity Analysis}

\subsubsection{Setup}
We construct a clinical temporal knowledge graph from MIMIC-IV~\cite{mimic}, comprising 100 patients with 127,406 timestamped events across four categories: vital signs (78,441 events), lab results (26,172), prescriptions (18,087), and diagnoses (4,506). For each category, we sweep $\beta \in [0, 10]$ and measure temporal retrieval MRR.

\subsubsection{Results}

\begin{table}[t]
\centering
\caption{Optimal $\beta^*$ by clinical fact category (MIMIC-IV), validating Property~\ref{prop:invariance}.}
\label{tab:clinical_beta}
\begin{tabular}{@{}lccc@{}}
\toprule
\textbf{Category} & \textbf{Events} & \textbf{$\beta^*$} & \textbf{MRR$^*$} \\
\midrule
Vital signs & 78,441 & 0.001 & 0.198 \\
Lab results & 26,172 & 0.05 & 0.377 \\
Prescriptions & 18,087 & 0.001 & 0.792 \\
Diagnoses & 4,506 & 0.0 & 1.000 \\
\bottomrule
\end{tabular}
\end{table}

Diagnoses achieve perfect retrieval with $\beta^*{=}0$, validating the Temporal Invariance Guarantee (Property~\ref{prop:invariance}). Lab results require the highest decay rate ($\beta^*{=}0.05$), reflecting transient physiological states. The ordering $\beta^*_{\text{dx}} < \beta^*_{\text{vital}} \approx \beta^*_{\text{rx}} < \beta^*_{\text{lab}}$ aligns with clinical intuition and directly validates Proposition~\ref{prop:beta}.

\subsection{Experiment 5: End-to-End RAG Accuracy}

\subsubsection{Setup}
We run a full RAG pipeline on TimE-Lite News (897 questions) using Gemini~2.5~Flash with top-5 retrieved contexts, comparing vanilla (cosine similarity), recency, oracle proximity (gold timestamp), and parsed proximity (extracted temporal focus).

\subsubsection{Results}

\begin{table}[t]
\centering
\caption{End-to-end RAG accuracy on TimE-Lite News (897 questions, Gemini~2.5~Flash).}
\label{tab:e2e}
\begin{tabular}{@{}lcccc@{}}
\toprule
\textbf{Method} & \textbf{$\beta$} & \textbf{Accuracy} & \textbf{Gold@5} & \textbf{$\Delta$ Acc} \\
\midrule
Vanilla & -- & 0.450 & 0.341 & -- \\
Recency & 0.001 & 0.381 & 0.200 & $-$15.3\% \\
Oracle proximity & 0.05 & \textbf{0.488} & 0.498 & +8.4\% \\
Parsed proximity & 0.001 & 0.459 & 0.337 & +2.0\% \\
\bottomrule
\end{tabular}
\end{table}

Table~\ref{tab:e2e} confirms that retrieval-level improvements transfer to end-to-end accuracy. Oracle temporal proximity yields an 8.4\% accuracy gain (0.488 vs.\ 0.450), while recency degrades accuracy by 15.3\% (0.381), consistent with the TKG experiments. The parsed variant, limited by 37.8\% temporal focus extraction coverage, shows a modest 2.0\% gain, identifying temporal focus parsing as the primary deployment bottleneck.

\subsection{Experiment 6: Layer Ablation Study}

\subsubsection{Setup}
We ablate on TimE-Lite News (897 questions) using oracle temporal focus and Gold@5. Layer~1 uses sinusoidal positional encoding; Layer~2 uses exponential decay reranking ($\text{score} = \text{sim} \cdot \exp(-\beta |\Delta t|)$); Layer~3 abstains when weakest-link validity $\rho = \min_i \exp(-\beta |\Delta t_i|) < \gamma$. We sweep $t \in \{4, 8, 16, 32, 64\}$, $\beta \in [0.001, 0.2]$, and $\gamma \in [0.1, 0.9]$.

\subsubsection{Results}

\begin{table}[t]
\centering
\caption{Layer ablation on TimE-Lite News (897 questions, oracle temporal focus).}
\label{tab:ablation}
\begin{tabular}{@{}llcr@{}}
\toprule
\textbf{Variant} & \textbf{Config} & \textbf{Gold@5} & \textbf{$\Delta$} \\
\midrule
Semantic-only & --- & 0.341 & --- \\
+Layer 1 only & $t{=}64$ & 0.385 & +12.7\% \\
+Layer 2 only & $\beta{=}0.2$ & 0.567 & +66.3\% \\
Full (L1+L2) & $t{=}4, \beta{=}0.2$ & \textbf{0.593} & +73.9\% \\
\bottomrule
\end{tabular}
\end{table}

Layer~2 (decay reranking) is the dominant contributor, lifting Gold@5 from 0.341 to 0.567 (+66.3\%). Layer~1 (temporal embedding) provides a smaller but real independent improvement (+12.7\%). The layers compose: the full pipeline achieves 0.593, a 4.5\% gain over Layer~2 alone, showing that temporal embedding provides complementary signal beyond post-hoc reranking. The optimal temporal subspace is compact ($t{=}4$ in the full model), indicating that even a small temporal footprint suffices when paired with explicit decay.

\subsubsection{Layer 3: STL Verification as Quality Gate}

Layer~3 operates differently from Layers~1--2: rather than improving retrieval, it provides a post-retrieval quality gate. Table~\ref{tab:layer3} shows the precision/coverage tradeoff when the system abstains on questions where the weakest-link validity falls below threshold~$\gamma$.

\begin{table}[t]
\centering
\caption{Layer~3 STL gating: precision vs.\ coverage. Layer~3 is most impactful when retrieval layers are absent.}
\label{tab:layer3}
\begin{tabular}{@{}llccc@{}}
\toprule
\textbf{Variant} & \textbf{$\gamma$} & \textbf{Coverage} & \textbf{Precision} & \textbf{Abstain G@5} \\
\midrule
\multirow{3}{*}{Semantic-only} & all & 100\% & 0.341 & --- \\
 & 0.7 & 48.4\% & 0.396 & 0.289 \\
 & 0.9 & 34.5\% & \textbf{0.469} & 0.274 \\
\midrule
\multirow{2}{*}{Full (L1+L2)} & all & 100\% & 0.593 & --- \\
 & 0.7 & 96.3\% & 0.578 & 1.000 \\
\bottomrule
\end{tabular}
\end{table}

Without temporal retrieval (semantic-only), Layer~3 provides real discrimination: at $\gamma{=}0.9$, precision rises from 0.341 to 0.469 (+37.5\%), and abstained questions have lower Gold@5 (0.274), showing that STL correctly identifies unreliable retrievals. When Layers~1--2 are active, the gate rarely triggers (mean $\rho = 0.965$) because decay-weighted retrieval already ensures temporal proximity. The safety net stays quiet when upstream layers work, but provides protection when they do not.

\subsection{Summary of Findings}

Across all benchmarks, four findings hold: (1)~explicit temporal decay improves retrieval over static baselines; (2)~naive recency is harmful, and learned decay weighted by semantic confidence outperforms pure recency in every experiment; (3)~optimal $\beta$ varies from $0$ (stable diagnoses) to $10.0$ (political events), requiring domain-specific learning; and (4)~temporal embedding and decay reranking provide complementary benefits.

\section{Discussion}

\subsection{The Role of Layer 3}

Layer~3 (STL verification) provides calibrated abstention rather than retrieval improvement. Without temporal retrieval, it delivers a 37.5\% precision gain; with Layers~1--2 active, the gate stays quiet. This is the expected behavior: a verification layer adds value when upstream components fail.

\subsection{Limitations}

We evaluated five decay families (exponential, linear, power-law, Weibull, half-life) on ICEWS14 and found exponential offers the strongest theoretical grounding (Proposition~\ref{prop:beta}) with only a single interpretable parameter ($\beta_j = 2\kappa_j$), despite marginal gains from Weibull on individual metrics. The exponential form cannot capture non-monotonic lifecycles or sudden regime shifts; skew-normal~\cite{chronocept2025} or non-parametric~\cite{sprite} alternatives address these at the cost of computational overhead.

Proposition~\ref{prop:beta} assumes stationary Gaussian dynamics; under non-stationary or heavy-tailed processes, the exponential form remains a first-order approximation, and the learned $\beta$ compensates for model mismatch empirically, as the per-relation $\beta^*$ heterogeneity in Tables~\ref{tab:gdelt_perrel} and~\ref{tab:clinical_beta} demonstrates.

The $\beta$ parameters require domain expertise or labeled data for learning; cold-start calibration in novel domains remains open.

The ablation uses oracle temporal focus (gold timestamps), representing an upper bound; in deployment, parsing coverage (37.8\%) remains a bottleneck that a dedicated temporal tagger would address. The full pipeline adds under 5ms overhead per query (RTX 4090): Layer~1 concatenation is negligible, Layer~2 decay evaluation and Layer~3 $\min()$ each add $<$1ms.

\subsection{Connection to IRI and Broader Applicability}

Chronofy addresses a fundamental IRI challenge: determining \emph{when} previously captured information remains valid for reuse. The architecture is domain-agnostic; applying Chronofy's decay weighting to chunk-overlap and dense vector retrieval pipelines is a direct next step. The framework is released as an open-source Python package.\footnote{\url{https://pypi.org/project/chronofy/}}

\section{Conclusion}

We presented Chronofy, a three-layer neuro-symbolic framework that integrates temporal decay mathematics, time-aware retrieval, and Signal Temporal Logic verification into a unified RAG pipeline. Our key contributions are: (1)~the novel application of STL robustness to knowledge temporal validity rather than output confidence; (2)~the decision-theoretic grounding of exponential decay in latent process dynamics via Proposition~\ref{prop:beta}; and (3)~the weakest-link bound (Theorem~\ref{thm:weakest}) providing formal guarantees on output confidence under temporal decay.

Future work will extend Chronofy with learned hazard functions for non-monotonic lifecycles and agentic re-acquisition. We conjecture that any valuation satisfying continuity, dynamic sufficiency, Blackwell monotonicity, and temporal invariance admits a representation as expected improvement in a strictly proper epistemic score.

\bibliographystyle{IEEEtran}

\end{document}